\documentclass{article}

\usepackage[preprint]{corl_2026} 
\usepackage{microtype}
\usepackage{graphicx}
\usepackage{subcaption}
\usepackage{booktabs}
\usepackage{tabularx}
\usepackage{array}
\usepackage{amsmath}
\usepackage{amssymb}
\usepackage{mathtools}
\usepackage{amsthm}
\usepackage{enumitem}
\usepackage{makecell}
\usepackage{xcolor}
\usepackage{cuted}
\usepackage{multirow}
\usepackage[capitalize,noabbrev]{cleveref}

\hypersetup{
  pdftitle={Graph-Guided Safe Diffuser: Topological Graph Guidance for Safe Diffusion Planning},
  pdfauthor={Nakgyu Yang, KwangBin Lee, SooJean Han},
  pdfsubject={arXiv preprint}
}

\newtheorem{theorem}{Theorem}
\newtheorem{assumption}[theorem]{Assumption}
\newtheorem{lemma}[theorem]{Lemma}

\title{Graph-Guided Safe Diffuser: Topological Graph Guidance for Safe Diffusion Planning}

\author{
  Nakgyu Yang$^{*}$ \quad KwangBin Lee$^{*}$ \quad SooJean Han \\
  School of Electrical Engineering, Korea Advanced Institute of Science and Technology (KAIST) \\
  Daejeon, Republic of Korea \\
  $^{*}$Equal contribution.
}

\begin{document}
\maketitle


\begin{abstract}
Many diffusion-based planners enforce safety through inference-time guidance, but such interleaved trajectory deformations often degrade kinematic feasibility due to manifold rupture.
We propose \textbf{Graph-Guided Safe Diffuser (G2SD)}, a hierarchical framework that leverages a high-level topological graph planner to guide a low-level diffusion model. G2SD enforces safety at a structural level by abstracting the data manifold into a learned latent graph, on which high-level planning is performed. Continuous trajectories are generated by diffusion planners, which are conditioned on the graph node representations selected by the high-level planner. 
Theoretical analyses demonstrate conditions under which manifold rupture occurs in diffusion planners, and show that G2SD improves safety by reducing the constraint violation probability as the number of segments increases.
Experiments demonstrate that G2SD substantially outperforms baselines, increasing goal-reaching rate without any collision from 40-50\% to 98\% in Maze2D navigation and also achieving superior task scores in locomotion.
\end{abstract}

\keywords{Robot Learning, Diffusion Planning, Safe Planning, Latent Graphs}



\section{Introduction}
Diffusion probabilistic models have gained significant traction in robotic planning \citep{janner2022diffuser, wang2022diffusion}, enabling the high-fidelity synthesis of high-dimensional, multimodal behaviors \citep{chi2025diffusion}. However, real-world deployment requires satisfying explicit safety constraints (e.g., collision avoidance) often missing from training. Recently, methods for enforcing safety via inference-time guidance in diffusion planning have emerged, using control-theoretic tools like control barrier functions to deform trajectories via gradient updates~\citep{xiao2023safediffuser}. However, enforcing such safety interventions during sampling can degrade generation quality due to conflicts with the learned dynamics throughout the denoising process.

This paper proposes \textbf{Graph-Guided Safe Diffuser (G2SD)}, a hierarchical framework that couples safety-aware topological routing with data-supported local diffusion planning. Offline, G2SD learns a discrete latent transition graph from demonstrations leveraging VQ-VAE~\citep{oord2017vqvae}: nodes represent prototypical states on the kinematic manifold, and edges encode empirically observed transitions. We prune nodes according to explicit safety constraints and solve a weighted shortest-path problem to obtain a constraint-compliant graph-node sequence. A low-level diffusion planner generates conditioned local segments supported by robot trajectory data, reducing long-horizon drift while preserving kinematic and dynamic consistency for safer and more reliable planning.

We support G2SD by theoretically and empirically analyzing the tension between safety guidance and trajectory feasibility. We formally characterize the phenomenon of \textit{manifold rupture}, where strong safety gradients push trajectories away from the data manifold (causing kinematic infeasibility). We then derive a probabilistic safety-violation bound, proving that decomposing long-horizon generation into conditioned segments yields an exponentially tighter bound on the probability of exceeding the local safety margin. Empirically, across long-horizon navigation and locomotion benchmarks, G2SD substantially improves success rates and safety margins over other existing guided safe diffusion baselines.


\section{Related Work}
\label{sec:related}

\textbf{Diffusion-based planning.}
Diffusion models have recently been used as trajectory generators for offline planning and motion generation, including Diffuser~\citep{janner2022diffuser}, trajectory-level navigation planners~\citep{yu2024trajectory}, motion-prior samplers~\citep{carvalho2025motion}, and C-space path generators~\citep{seo2025presto}. These methods demonstrate the ability of diffusion models to produce coherent multimodal trajectories, but they focus on generating feasible plans from learned trajectory priors rather than enforcing safety constraints during sampling.

\textbf{Safety-constrained generative planning.}
Safety in generative planners is often enforced through projection, classifier/gradient guidance, or control-theoretic corrections during sampling~\citep{dhariwal2021diffusion}. Recent methods incorporate control barrier functions, Lyapunov certificates, or related safety filters into diffusion or flow-matching planners to improve constraint satisfaction~\citep{xiao2023safediffuser,yang2025safeflowmatcher,mizuta2024cobl,chengsafe}. While effective for reducing violations, such inference-time corrections can push samples away from the data-supported trajectory manifold, leading to dynamically implausible plans over long horizons.

\textbf{Hierarchical and graph-based planning.}
Hierarchical planning decomposes long-horizon tasks into global decisions and local executions, as in task-and-motion planning and skill stitching~\citep{kaelbling2011hpn,garrett2021tamp,seker2019cnmp}. Graph-based methods, including PRM/RRT, topological maps, learned roadmaps, and planning-over-data, similarly use connectivity structures to support long-horizon reachability~\citep{lavalle2001rrt,Dijkstra1959,blochliger2018topomap,qureshi2019mpnet,saroya2021roadmap,eysenbach2019sorb}. Our method combines these ideas with diffusion planning: it learns a data-derived latent topological graph from demonstrated trajectories, prunes unsafe nodes before search, and uses diffusion only for short conditioned bridges.


\section{Problem Formulation}\label{sec:problem_formulation}

We consider long-horizon robotic trajectory generation with dynamic feasibility and explicit safety constraints.
Let $\boldsymbol{\tau}=\{\mathbf{s}_h\}_{h=0}^{H-1}\in\mathbb{R}^{H\times D}$ denote a trajectory of horizon $H$, where $\mathbf{s}_h\in\mathbb{R}^D$ is the robot state. 
We distinguish three geometric objects in trajectory space.
The \textit{ambient space} $\mathbb{R}^{H\times D}$ contains all trajectory parameters.
The \textit{kinematic manifold} $\mathcal{M}_{\mathrm{kin}}\subset\mathbb{R}^{H\times D}$ contains trajectories satisfying robot kinematic and dynamic constraints, such as joint limits and temporal continuity.
The \textit{data manifold} $\mathcal{M}\subset\mathcal{M}_{\mathrm{kin}}$ denotes the behaviorally relevant subset supported by expert demonstrations.
We formalize this local geometric view as follows.

\begin{assumption}[Data Manifold]
\label{assum:data_manifold}
There exists a smooth embedded submanifold
$\mathcal{M}\subset\mathcal{M}_{\mathrm{kin}}$ such that the expert data distribution $p_{\mathrm{data}}$ is concentrated near $\mathcal{M}$:
\begin{equation*}
    \mathbb{P}_{\boldsymbol{\tau}\sim p_{\mathrm{data}}}
    \bigl(d_{\mathcal{M}}(\boldsymbol{\tau})\le \delta\bigr)
    \ge 1-\epsilon ,
\end{equation*}
where
$d_{\mathcal{M}}(\boldsymbol{\tau})
\coloneqq
\inf_{\mathbf{y}\in\mathcal{M}}\|\boldsymbol{\tau}-\mathbf{y}\|$,
$\delta>0$ is small relative to the data scale, and $\epsilon>0$ is a probability tolerance.
\end{assumption}

For multimodal or hybrid data, this assumption can be interpreted locally on each smooth component, which is sufficient for our subsequent analysis.

Diffusion planners generate trajectories by reversing a stochastic denoising process.
Starting from a Gaussian prior $\boldsymbol{\tau}_T\sim\mathcal{N}(0,I)$ in the ambient space, a learned score
$\mathbf{s}_{\theta}(\boldsymbol{\tau}_t,t)\approx\nabla_{\boldsymbol{\tau}}\log p_t(\boldsymbol{\tau}_t)$
iteratively denoises toward a feasible trajectory.
A standard discretized update is
\begin{equation}
    \boldsymbol{\tau}_{t-1}
    \leftarrow
    \boldsymbol{\tau}_t
    + \eta\,\underbrace{\mathbf{s}_{\theta}(\boldsymbol{\tau}_t,t)}_{\text{prior drift}}
    + \sigma_t \mathbf{z},
    \quad
    \mathbf{z}\sim\mathcal{N}(0,I),
    \label{eq:score_update}
\end{equation}
where $\sigma_t$ is a noise schedule.
The prior drift encourages samples to shift near the learned data manifold $\mathcal{M}$.

Safety-guided planners that enforce safety, such as obstacle avoidance, introduce an auxiliary cost
$\mathcal{J}_{\mathrm{safe}}(\boldsymbol{\tau}_t)$ and modify the score by adding a guidance force:
\begin{equation}
    \widehat{\mathbf{s}}_{\theta}(\boldsymbol{\tau}_t,t)
    =
    \mathbf{s}_{\theta}(\boldsymbol{\tau}_t,t)
    -
    \gamma\,
    \underbrace{\nabla_{\boldsymbol{\tau}_t}
    \mathcal{J}_{\mathrm{safe}}(\boldsymbol{\tau}_t)}_{\text{safety force}},
    \label{eq:guided_score}
\end{equation}
where $\gamma$ is the guidance scale. This construction assumes that the safety force nudges trajectories into the safe region without disrupting the learned prior. However, strong safety gradients can interfere with the learned denoising dynamics and push samples away from the data manifold. We refer to this failure as \textit{manifold rupture}. In this regime, generated trajectories may satisfy local safety constraints while becoming kinematically infeasible. Section~\ref{sec:rupture_theory} elaborates this effect in more detail.

\section{Graph-Guided Safe Diffuser (G2SD)}\label{sec:g2sd_main}

We propose \textbf{Graph-Guided Safe Diffuser (G2SD)}, a framework that unifies offline latent topological structure with online safe generative control. Our approach incorporates safety at the structural planning level through two mechanisms. First, offline graph construction builds a discrete graph from the trajectory dataset using VQ-VAE state discretization, where nodes represent prototypical robot states and edges encode data-supported transitions. Second, online hierarchical planning supports safety by pruning risky nodes that violate task-specific safety constraints, followed by high-level node planning to compute a constraint-compliant graph-node sequence. A low-level diffusion planner generates locally feasible segments between successive grounded graph nodes. These segments are concatenated to form the complete start-to-goal trajectory.

\subsection{Offline Graph Construction} 
\label{sec:graph_construct}
We construct an offline graph \(\mathcal{G}_{\text{total}}=(\mathcal{Z},\mathcal{E})\) from expert trajectories as a discrete abstraction of empirical reachability on the state-level projection of the demonstrated trajectory manifold \(\mathcal{M}\). Nodes $\mathcal{Z}$ provide a discrete abstraction of states, and directed edges $\mathcal{E}$ encode empirically observed transitions.

\textbf{VQ-VAE Discretization.}
We learn a discrete abstraction of this induced state-level support from expert demonstrations. We train a VQ-VAE~\citep{oord2017vqvae} to map each continuous robot state $\mathbf{s}$ to a discrete codebook index. The encoder assigns $E_\phi(\mathbf{s}_h)=z_i$, where $z_i\in\mathcal{Z}:=\{1,\dots,Z\}$. We use these indices as graph nodes, each representing a cluster of prototypical robot states. For each node, we store representative physical states to enable explicit safety checks when pruning unsafe nodes, as described in Section~\ref{subsec:onlineplanning}.

\textbf{Graph Construction.}
We connect the nodes based on empirical reachability over a fixed temporal local horizon $N$. A directed edge $(z_i, z_j)$ is added if the dataset contains a transition pair $(\mathbf{s}_h, \mathbf{s}_{h+N})$, where $E_\phi(\mathbf{s}_h)=z_i$ and $E_\phi(\mathbf{s}_{h+N})=z_j$. During this construction, we also estimate the empirical transition probability $P(z_j | z_i)$ from the observed transition frequencies in the dataset. We model $\mathcal{G}_\text{total}$ as a weighted directed graph by assigning each observed edge a cost. To incorporate both the geometric progress and the reliability of transitions, we define the edge weight $W(z_i, z_j)$ as:
\begin{equation}
    W(z_i, z_j) \coloneqq  \lambda_\text{dist}\underbrace {c(z_i, z_j)}_{\text{geometric cost}} + \lambda_\text{prob}  \underbrace{(-\log P(z_j | z_i))}_{\text{transition cost}},
    \label{eq:edge_weight}
\end{equation}
where $c(\cdot,\cdot)$ is a task-dependent geometric cost function, and $\lambda_\text{dist}, \lambda_\text{prob} {\,\geq\,} 0$ are tunable balancing parameters. Appendix~\ref{app:Cost_Formulation} contains discussion on $c(\cdot,\cdot)$ and the parameters.

\subsection{Online Safe Hierarchical Planning} \label{subsec:onlineplanning}

\textbf{Safety Pruning.}
We obtain the safe subgraph $\mathcal{G}_{\text{safe}} \subseteq \mathcal{G}_{\text{total}}$ by evaluating each node's representative physical state under task-specific safety constraints. We exclude nodes intersecting unsafe regions in navigation and nodes with unstable configurations in locomotion, such as height or pose-limit violations. This yields a pruned graph whose candidate graph-node sequence satisfies the safety checks.

\textbf{High-level Node Planning.}
The high-level planner identifies an edge-cost-optimal graph-node sequence by solving a shortest-path problem (e.g., via Dijkstra's algorithm) on $\mathcal{G}_{\text{safe}}$. Formally, we map the given states $\mathbf{s}^{\text{start}}$ and $\mathbf{s}^{\text{goal}}$ to their nearest graph nodes $z_0$ and $z_M$, and search for a discrete path through the safe subgraph. Specifically, we compute the sequence $\boldsymbol{\tau}_{\text{graph}}^* {\,=\,} (z_0, z_1, \dots, z_M)$ of variable length $M$ that minimizes the cumulative edge weight.
\begin{equation}
    \boldsymbol{\tau}_{\text{graph}}^*
    =
    \operatorname*{argmin}_{(z_0, \dots, z_M)}
    \sum_{k=0}^{M-1} W(z_k, z_{k+1}),
\end{equation}
where \(M\) is the number of low-level trajectory segments, determined by the latent graph topology rather than being fixed a priori.

\textbf{Low-level Diffusion Planning.}
Given the high-level path \(\boldsymbol{\tau}_{\text{graph}}^*=(z_0,z_1,\ldots,z_M)\), each node \(z_k\) in this path is grounded to a representative physical state \(\mathbf{s}_k\), and the low-level planner generates an endpoint-conditioned trajectory segment from \(\mathbf{s}_k\) to \(\mathbf{s}_{k+1}\). Each segment is therefore anchored at both its start and end states, converting long-horizon generation into a sequence of short endpoint-conditioned diffusion bridges. These diffusion bridges sample data-supported intermediate motions from the learned local motion manifold, preserving kinematic and dynamic consistency while reducing off-manifold artifacts. By Theorem~\ref{thm:safety_bound}, these local endpoint conditions yield an exponentially tighter bound on the probability of exceeding the local safety margin.


\section{Theoretical Analysis}
\label{sec:theory}

We provide a geometric explanation for why reactive safety guidance can degrade diffusion planning, and then show how endpoint-anchored generation yields a probabilistic safety bound.

\subsection{Geometric Conflict and Manifold Rupture}
\label{sec:rupture_theory}

We first analyze the conflict between the learned trajectory prior and external safety guidance.

\begin{assumption}[\textbf{Regularity Conditions}]
\label{assump:regularity}
\leavevmode
\begin{enumerate}
    \item $\mathcal{M}$ is a $\mathcal{C}^2$ embedded submanifold of $\mathbb{R}^{H\times D}$ with positive reach $\rho>0$. Every $\boldsymbol{\tau}$ with $d_{\mathcal{M}}(\boldsymbol{\tau})<\rho$ has a unique nearest-point projection onto $\mathcal{M}$.
    \item The learned score $\mathbf{s}_{\theta}(\boldsymbol{\tau},t)$ is $\mathcal{C}^1$ and norm-bounded:
    $\|\mathbf{s}_{\theta}(\boldsymbol{\tau},t)\|\le C_s$ for all $t\in[0,T]$ and all $\boldsymbol{\tau}$ with $d_{\mathcal{M}}(\boldsymbol{\tau})<\rho$.
\end{enumerate}
\end{assumption}

Assumption~\ref{assump:regularity} ensures that, on the tubular neighborhood $\mathcal{N}_{\rho}\coloneqq\{\boldsymbol{\tau}:0<d_{\mathcal{M}}(\boldsymbol{\tau})<\rho\}$, the outward normal $\mathbf{n}(\boldsymbol{\tau})=\nabla d_{\mathcal{M}}(\boldsymbol{\tau})$ is well-defined and the squared-distance function $\frac12 d_{\mathcal{M}}(\boldsymbol{\tau})^2$ is locally $\mathcal{C}^2$.

\begin{theorem}[Manifold Rupture]
\label{thm:rupture}
Consider the Euler--Maruyama update at diffusion step $t$:
\begin{equation}
\label{eq:update_rule_rigorous}
    \boldsymbol{\tau}_{t-1}
    =
    \boldsymbol{\tau}_t
    + \eta \mathbf{s}_{\theta}(\boldsymbol{\tau}_t,t)
    + \sqrt{\eta}\mathbf{z}
    + \boldsymbol{\delta}_{\mathrm{safe}},
    \qquad
    \mathbf{z}\sim\mathcal{N}(0,I),
\end{equation}
where $\boldsymbol{\tau}_t\in\mathcal{N}_{\rho}$ and
$\boldsymbol{\delta}_{\mathrm{safe}}=\eta\mathbf{f}_{\mathrm{safe}}$.
Let $r_{\mathcal{M}}(\boldsymbol{\tau})\coloneqq\frac12 d_{\mathcal{M}}(\boldsymbol{\tau})^2$,
$\mathbf{n}_t\coloneqq\nabla d_{\mathcal{M}}(\boldsymbol{\tau}_t)$, and
$R_{\rho}\coloneqq
\sup_{\boldsymbol{\tau}\in\mathcal{N}_{\rho}}
\left|\operatorname{Tr}(\nabla^2 r_{\mathcal{M}}(\boldsymbol{\tau}))\right|$.
Suppose $\mathbf{n}_t^\top\mathbf{f}_{\mathrm{safe}}>0$ and define
$\mathbf{f}_{\mathrm{safe},\perp}\coloneqq
(\mathbf{n}_t^\top\mathbf{f}_{\mathrm{safe}})\mathbf{n}_t$.
If
\begin{equation}
\label{eq:dominance_condition}
    d_{\mathcal{M}}(\boldsymbol{\tau}_t)
    \bigl(\|\mathbf{f}_{\mathrm{safe},\perp}\|-C_s\bigr)
    -
    \frac12 R_{\rho}>0,
\end{equation}
then, for sufficiently small $\eta$, the expected squared distance to $\mathcal{M}$ strictly increases:
\begin{equation}
\label{eq:bound_squared_distance}
    \mathbb{E}
    \left[
    r_{\mathcal{M}}(\boldsymbol{\tau}_{t-1})
    \mid
    \boldsymbol{\tau}_t
    \right]
    \ge
    r_{\mathcal{M}}(\boldsymbol{\tau}_t)
    +
    \epsilon_{\mathrm{div}},
\end{equation}
where
\begin{equation}
\label{eq:divergence_margin}
    \epsilon_{\mathrm{div}}
    =
    \eta
    \left[
    d_{\mathcal{M}}(\boldsymbol{\tau}_t)
    \bigl(\|\mathbf{f}_{\mathrm{safe},\perp}\|-C_s\bigr)
    -
    \frac12 R_{\rho}
    \right]
    +
    o(\eta)
    >
    0.
\end{equation}
\end{theorem}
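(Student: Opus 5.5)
We will prove the bound by a second-order Taylor expansion of $r_{\mathcal{M}}$ around $\boldsymbol{\tau}_t$, taking the conditional expectation over $\mathbf{z}$. Write the increment as
\[
\Delta=\eta\mathbf{s}_{\theta}(\boldsymbol{\tau}_t,t)+\eta\mathbf{f}_{\mathrm{safe}}+\sqrt{\eta}\mathbf{z}.
\]
By Assumption~\ref{assump:regularity}, $r_{\mathcal{M}}$ is $\mathcal{C}^2$ on $\mathcal{N}_\rho$. Its gradient is $\nabla r_{\mathcal{M}}(\boldsymbol{\tau}_t)=d_{\mathcal{M}}(\boldsymbol{\tau}_t)\mathbf{n}_t$, since $\nabla d_{\mathcal{M}}=\mathbf{n}$ off $\mathcal{M}$. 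Taylor's theorem with remainder gives
\[
r_{\mathcal{M}}(\boldsymbol{\tau}_t+\Delta)=r_{\mathcal{M}}(\boldsymbol{\tau}_t)+d_{\mathcal{M}}(\boldsymbol{\tau}_t)\,\mathbf{n}_t^\top\Delta+\tfrac12\Delta^\top\nabla^2 r_{\mathcal{M}}(\boldsymbol{\xi})\Delta
\]
for some $\boldsymbol{\xi}$ on the segment. Because $\boldsymbol{\tau}_t$ lies strictly inside $\mathcal{N}_\rho$, the segment stays in the tube with probability $1-o(\eta^k)$ for any $k$ as $\eta\to0$. The complementary event is handled by Gaussian tails, because $r_{\mathcal{M}}\ge 0$ and grows at most quadratically.

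We treat the first-order term using $\mathbb{E}[\mathbf{z}]=0$. This gives
\[
\mathbb{E}[\mathbf{n}_t^\top\Delta]=\eta\,\mathbf{n}_t^\top\mathbf{s}_\theta+\eta\,\mathbf{n}_t^\top\mathbf{f}_{\mathrm{safe}}.
\]
Cauchy--Schwarz and $\|\mathbf{s}_\theta\|\le C_s$ give $\mathbf{n}_t^\top\mathbf{s}_\theta\ge -C_s$. Since $\mathbf{n}_t^\top\mathbf{f}_{\mathrm{safe}}>0$ and $\|\mathbf{n}_t\|=1$, we have $\mathbf{n}_t^\top\mathbf{f}_{\mathrm{safe}}=\|\mathbf{f}_{\mathrm{safe},\perp}\|$. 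The first-order contribution is therefore at least $\eta\,d_{\mathcal{M}}(\boldsymbol{\tau}_t)(\|\mathbf{f}_{\mathrm{safe},\perp}\|-C_s)$.

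We split the second-order term into its noise part and its deterministic part. The noise part is $\frac{\eta}{2}\mathbb{E}[\mathbf{z}^\top\nabla^2 r_{\mathcal{M}}(\boldsymbol{\xi})\mathbf{z}]$. Replacing $\boldsymbol{\xi}$ by $\boldsymbol{\tau}_t$ costs $o(\eta)$ by continuity of the Hessian and Gaussian moment bounds. It then equals $\frac{\eta}{2}\operatorname{Tr}\nabla^2 r_{\mathcal{M}}(\boldsymbol{\tau}_t)$, which is bounded below by $-\frac{\eta}{2}R_\rho$. This is a crude bound: the trace of $\nabla^2 r_{\mathcal{M}}$ is in fact nonnegative near $\mathcal{M}$, since it is close to the codimension, but the crude bound matches the statement. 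The drift-squared and drift–noise cross terms are $O(\eta^2)$ and $O(\eta^{3/2})$ respectively, using bounded $C_s$ and a fixed $\mathbf{f}_{\mathrm{safe}}$. The cross term in fact has zero mean to leading order. All of these are absorbed into $o(\eta)$.

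Collecting terms gives
\[
\mathbb{E}[r_{\mathcal{M}}(\boldsymbol{\tau}_{t-1})\mid\boldsymbol{\tau}_t]\ge r_{\mathcal{M}}(\boldsymbol{\tau}_t)+\epsilon_{\mathrm{div}},
\]
with $\epsilon_{\mathrm{div}}$ as in \eqref{eq:divergence_margin}. Condition \eqref{eq:dominance_condition} makes the bracket a fixed positive constant, so for sufficiently small $\eta$ the $o(\eta)$ remainder cannot cancel it, and $\epsilon_{\mathrm{div}}>0$.

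The main obstacle is making the $o(\eta)$ remainder rigorous. The Gaussian noise has unbounded support, so $\boldsymbol{\tau}_t+\Delta$ may leave $\mathcal{N}_\rho$, where $r_{\mathcal{M}}$ need not be $\mathcal{C}^2$ and $\mathbf{s}_\theta$ is uncontrolled. We will first truncate on the event $\{\|\sqrt{\eta}\mathbf{z}\|\le \eta^{1/4}\}$. On this event the segment stays within distance $\rho' <\rho$ of $\mathcal{M}$, and the Hessian is uniformly continuous on the closed sub-tube. The complement has probability $\exp(-c\,\eta^{-1/2})$. On the complement we use only $r_{\mathcal{M}}\ge0$, which is a lower bound and therefore the right direction, together with a bound on $r_{\mathcal{M}}(\boldsymbol{\tau}_t)\mathbb{P}(\text{complement})$, which is superpolynomially small.
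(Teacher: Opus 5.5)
Your proposal is correct and follows the paper's proof. Both use a second-order Taylor expansion of $r_{\mathcal{M}}$ with $\nabla r_{\mathcal{M}}=d_{\mathcal{M}}\mathbf{n}$, drop the zero-mean noise term, and use $\mathbb{E}[\mathbf{z}\mathbf{z}^\top]=I$ to get the trace term bounded below by $-\frac12 R_\rho$; Cauchy--Schwarz handles the score and $\mathbf{n}_t^\top\mathbf{f}_{\mathrm{safe}}=\|\mathbf{f}_{\mathrm{safe},\perp}\|$ the safety force. Your truncation on the symmetric event $\{\|\mathbf{z}\|\le\eta^{-1/4}\}$, where the linear noise term still vanishes, justifies the $o(\eta)$ remainder that the paper simply asserts; note that it needs only continuity of the Hessian near $\boldsymbol{\tau}_t$, not the claimed uniform continuity on the whole closed sub-tube, which can fail if $\mathcal{M}$ is noncompact.
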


\begin{proof}
Let
$\Delta\boldsymbol{\tau}_t
\coloneqq
\eta\mathbf{s}_{\theta}(\boldsymbol{\tau}_t,t)
+\sqrt{\eta}\mathbf{z}
+\eta\mathbf{f}_{\mathrm{safe}}$.
Since $r_{\mathcal{M}}$ is $\mathcal{C}^2$ on $\mathcal{N}_{\rho}$ and
$\nabla r_{\mathcal{M}}(\boldsymbol{\tau})
=d_{\mathcal{M}}(\boldsymbol{\tau})\mathbf{n}(\boldsymbol{\tau})$,
Taylor expansion gives
\begin{equation}
\label{eq:taylor_expansion}
r_{\mathcal{M}}(\boldsymbol{\tau}_{t-1}) = r_{\mathcal{M}}(\boldsymbol{\tau}_t) + \nabla r_{\mathcal{M}}(\boldsymbol{\tau}_t)^\top \Delta\boldsymbol{\tau}_t + \frac12 \Delta\boldsymbol{\tau}_t^\top \nabla^2 r_{\mathcal{M}}(\boldsymbol{\tau}_t) \Delta\boldsymbol{\tau}_t + o(\eta).
\end{equation}
Taking conditional expectation removes the zero-mean linear noise term, while
$\mathbb{E}[\mathbf{z}\mathbf{z}^\top]=I$ contributes
$\frac12\eta\operatorname{Tr}(\nabla^2 r_{\mathcal{M}}(\boldsymbol{\tau}_t))$.
The prior drift is bounded by
\[
\left|
\nabla r_{\mathcal{M}}(\boldsymbol{\tau}_t)^\top
\eta\mathbf{s}_{\theta}(\boldsymbol{\tau}_t,t)
\right|
\le
\eta d_{\mathcal{M}}(\boldsymbol{\tau}_t) C_s,
\]
whereas the safety force contributes
$
\nabla r_{\mathcal{M}}(\boldsymbol{\tau}_t)^\top
\eta\mathbf{f}_{\mathrm{safe}}
=
\eta d_{\mathcal{M}}(\boldsymbol{\tau}_t)
\|\mathbf{f}_{\mathrm{safe},\perp}\|.
$
Substituting into~\eqref{eq:taylor_expansion} yields \eqref{eq:bound_squared_distance}--\eqref{eq:divergence_margin}. Condition~\eqref{eq:dominance_condition} ensures a positive margin for small enough $\eta$.
\end{proof}

Theorem~\ref{thm:rupture} formalizes manifold rupture: a strong safety force can move samples away from the data manifold. G2SD avoids this reactive-guidance failure mode through a safety-pruned latent graph rather than injecting safety corrections into the denoising dynamics. We further provide empirical support for the relevance of~\eqref{eq:dominance_condition} in Appendix~\ref{app:assumption_support}.

\subsection{Probabilistic Safety Bounds from Hierarchical Decomposition}
\label{sec:safety_bounds}

We now analyze the safety benefit of decomposing long-horizon generation into short conditioned segments. G2SD selects a graph path with \(M+1\) nodes, and generates the full trajectory by stitching $M$ conditional diffusion segments.

\begin{assumption}[\textbf{Conditional Independence of Segments}]
\label{assump:independence}
Conditioned on the planned graph-node sequence, the generated segments are independent, and the conditional generation error in each segment is approximated by a Brownian bridge.
\end{assumption}

Appendix~\ref{app:assumption_support} provides implementation and empirical support. Let $\delta_{\mathrm{safe}}$ denote the minimum local safety margin along the planned graph route. We define $\mathrm{Violation}_{\mathrm{G2SD}}$ as the event that the maximum deviation within at least one conditioned segment exceeds this safety margin.

\begin{lemma}[\textbf{Kolmogorov tail bound for Brownian bridges}]
\label{lem:bridge_supremum_cdf}
Let $X_t=\sigma W_t$ be a scaled Brownian motion, and let $B_t$ be the Brownian bridge on $[0,\Delta t]$ obtained by conditioning $X_{\Delta t}=0$. 
Then, for any $\beta>0$,
\begin{equation*}
    \mathbb{P}\!\left(\sup_{t\in[0,\Delta t]} |B_t|>\beta\right)
    \le
    2\exp\!\left(-\frac{2\beta^2}{\sigma^2\Delta t}\right).
\end{equation*}
\end{lemma}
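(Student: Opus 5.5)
We reduce to the standard Brownian bridge on $[0,1]$ by scaling, then use the Kolmogorov–Smirnov series and a one-sided reflection bound.

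\textbf{Step 1 (scaling).} Let $B^0_s$, $s\in[0,1]$, denote the standard Brownian bridge. By Brownian scaling, $X_t=\sigma W_t$ conditioned on $X_{\Delta t}=0$ has the same law as $\sigma\sqrt{\Delta t}\,B^0_{t/\Delta t}$. To see this, write $B_t = X_t - \frac{t}{\Delta t}X_{\Delta t}$, which is Gaussian with covariance $\sigma^2\bigl(\min(s,t)-st/\Delta t\bigr)$. Therefore
\[
\mathbb{P}\Bigl(\sup_{t\in[0,\Delta t]}|B_t|>\beta\Bigr)=\mathbb{P}\Bigl(\sup_{s\in[0,1]}|B^0_s|>x\Bigr),\qquad x\coloneqq \frac{\beta}{\sigma\sqrt{\Delta t}}.
\]
It then suffices to show $\mathbb{P}(\sup_s|B^0_s|>x)\le 2e^{-2x^2}$, since $2x^2=2\beta^2/(\sigma^2\Delta t)$.

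\textbf{Step 2 (one-sided reflection).} For the one-sided supremum we use the classical identity $\mathbb{P}(\sup_{s\in[0,1]}B^0_s>x)=e^{-2x^2}$ for $x>0$. It follows from the reflection principle for $W$: the density of $W_1$ at $0$ on the event $\{\sup_{s\le 1}W_s>x\}$ equals the density of $W_1$ at $2x$, so the conditional probability is $\varphi(2x)/\varphi(0)=e^{-2x^2}$. Applying the same identity to $-B^0$, which is again a standard bridge, gives $\mathbb{P}(\inf_s B^0_s<-x)=e^{-2x^2}$. A union bound then yields
\[
\mathbb{P}\Bigl(\sup_s|B^0_s|>x\Bigr)\le 2e^{-2x^2}.
\]

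\textbf{Step 3 (alternative via the Kolmogorov series).} One could instead cite the Kolmogorov distribution
\[
\mathbb{P}\Bigl(\sup_s|B^0_s|>x\Bigr)=2\sum_{k\ge1}(-1)^{k-1}e^{-2k^2x^2}.
\]
This is an alternating series with terms decreasing in $k$, so it is bounded by its first term $2e^{-2x^2}$. This is presumably the reason for the lemma's name.

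\textbf{Main obstacle.} The only delicate point is rigorously justifying the conditioning on the null event $\{X_{\Delta t}=0\}$ in the reflection argument. We handle it by conditioning on $X_{\Delta t}\in(-\varepsilon,\varepsilon)$, applying reflection to both the event and its density, and letting $\varepsilon\to0$. Alternatively, we can use the explicit representation $B_t=X_t-\frac{t}{\Delta t}X_{\Delta t}$ together with the known law of its maximum. Everything else is routine scaling.
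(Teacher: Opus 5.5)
Your proof is correct, and your primary argument takes a different route from the paper's. The paper's proof is exactly your Step~3: it cites the two-sided Kolmogorov distribution $\mathbb{P}(K>\beta)=2\sum_{k\ge1}(-1)^{k-1}e^{-2k^2\beta^2}$, bounds it by its first term, and then says ``rescaling.'' You also supply two details the paper leaves implicit: why the alternating series is bounded by its first term (the terms decrease in $k$), and the covariance check that justifies the rescaling.

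Your Step~2 uses only the one-sided identity $\mathbb{P}(\sup_{s\le 1}B^0_s>x)=e^{-2x^2}$, which follows from a single reflection. You apply it to $B^0$ and to $-B^0$, so the factor $2$ comes from a union bound rather than from the leading term of a series. This is more elementary. The full Kolmogorov law requires the method of images, with infinitely many reflections between the two barriers, whereas you need one reflection and still get the same constant.

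The paper's route buys brevity given the citation, and exactness. Expanding the series as $2e^{-2x^2}-2e^{-8x^2}+\cdots$ shows the bound is sharp to leading order. Equivalently, your union bound discards only the probability that both barriers are crossed, which is $O(e^{-8x^2})$. Your approach gives a weaker sharpness statement for free: the one-sided identity yields the lower bound $\mathbb{P}(\sup_s|B^0_s|>x)\ge e^{-2x^2}$, so the bound is tight up to a factor of $2$.
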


\begin{proof}
For the standard Brownian bridge $\overline{B}$ on $[0,1]$, 
$K=\sup_{t\in[0,1]}|\overline{B}(t)|$ follows the Kolmogorov distribution~\citep{kolmogorov_cdf}:
\[
    \mathbb{P}(K>\beta)
    =
    2\sum_{k=1}^{\infty}(-1)^{k-1}e^{-2k^2\beta^2}
    \le
    2e^{-2\beta^2}.
\]
Rescaling time to $[0,\Delta t]$ and variance to $\sigma^2$ yields the claim.
\end{proof}

\begin{theorem}[\textbf{Violation Probability Bound}]
\label{thm:safety_bound}
Under Assumption~\ref{assump:independence}, consider a trajectory generated by G2SD over horizon $H$ using $M$ endpoint-anchored segments of equal duration $\Delta t=H/M$. Then
\begin{equation}
\label{eq:traj_violation}
    \mathbb{P}\!\left(\mathrm{Violation}_{\mathrm{G2SD}}\right)
    \le
    2M
    \exp\!\left(
    -\frac{2M\delta_{\mathrm{safe}}^2}{\sigma^2 H}
    \right).
\end{equation}
\end{theorem}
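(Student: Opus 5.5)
The plan is a union bound over the $M$ segments combined with Lemma~\ref{lem:bridge_supremum_cdf} applied segment by segment. Under Assumption~\ref{assump:independence}, the conditional generation error in segment $k\in\{1,\dots,M\}$ is modeled as a Brownian bridge $B^{(k)}$ on $[0,\Delta t]$. This bridge is obtained from the scaled Brownian motion $\sigma W_t$ by pinning both endpoints, which is exactly the endpoint anchoring at the grounded graph nodes $\mathbf{s}_{k-1}$ and $\mathbf{s}_k$.

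First, I write the violation event as a union. By definition,
\[
\mathrm{Violation}_{\mathrm{G2SD}}=\bigcup_{k=1}^{M}\Bigl\{\sup_{t\in[0,\Delta t]}|B^{(k)}_t|>\delta_{\mathrm{safe}}\Bigr\}.
\]
Since $\delta_{\mathrm{safe}}$ is the minimum local margin along the route, using it uniformly for every segment can only enlarge each event. So it is legitimate to work with this single threshold.

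Second, I bound each event with Lemma~\ref{lem:bridge_supremum_cdf}, taking $\beta=\delta_{\mathrm{safe}}$ and $\Delta t=H/M$. This gives
\[
\mathbb{P}\Bigl(\sup_{t}|B^{(k)}_t|>\delta_{\mathrm{safe}}\Bigr)\le 2\exp\!\Bigl(-\tfrac{2M\delta_{\mathrm{safe}}^2}{\sigma^2H}\Bigr).
\]
Third, I apply the union bound (Boole's inequality) and sum the $M$ identical terms, which yields \eqref{eq:traj_violation}.

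Independence is not actually needed for the upper bound, since the union bound holds regardless. I would still remark that independence would give the sharper expression $1-\prod_k\bigl(1-p_k\bigr)$, which is itself bounded by $\sum_k p_k$.

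The main obstacle is not the calculation but justifying the modeling step. Each segment's error must be a bridge of the same diffusion coefficient $\sigma$ with duration $\Delta t=H/M$, so that the variance scales linearly in segment length. This is precisely what the Brownian-bridge part of Assumption~\ref{assump:independence} provides, and I would state explicitly that it is being invoked there.

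I would also add a brief remark on the exponential improvement. For fixed $H$, the exponent grows linearly in $M$ while the prefactor grows only linearly, so the bound decays for $M>\sigma^2H/(2\delta_{\mathrm{safe}}^2)$. This is the sense in which the bound becomes exponentially tighter as the number of segments increases.
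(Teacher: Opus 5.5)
Your proposal is correct and matches the paper's proof: bound each segment's violation event by Lemma~\ref{lem:bridge_supremum_cdf} with $\beta=\delta_{\mathrm{safe}}$ and $\Delta t=H/M$, then take the union bound over the $M$ segments. Your side remarks are also accurate: independence is not needed for the union bound, and the bound decreases in $M$ once $M>\sigma^2H/(2\delta_{\mathrm{safe}}^2)$.
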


\begin{proof}
Let $V_k$ denote the event that segment $k$ violates the safety threshold:
\[
    V_k
    =
    \left\{
    \sup_{t\in[0,\Delta t]} |B_t^{(k)}|
    >
    \delta_{\mathrm{safe}}
    \right\}.
\]
By Lemma~\ref{lem:bridge_supremum_cdf},
\begin{equation}
\label{eq:single_segment_violation}
    \mathbb{P}(V_k)
    \le
    2\exp\!\left(
    -\frac{2\delta_{\mathrm{safe}}^2}{\sigma^2\Delta t}
    \right)
    =
    2\exp\!\left(
    -\frac{2M\delta_{\mathrm{safe}}^2}{\sigma^2H}
    \right).
\end{equation}
Since $\mathrm{Violation}_{\mathrm{G2SD}}=\bigcup_{k=1}^M V_k$, applying the union bound over all segments gives~\eqref{eq:traj_violation}.
\end{proof}

Theorem~\ref{thm:safety_bound} shows that anchoring tightens the safety-violation bound by splitting long-horizon generation into shorter conditional bridges. The resulting exponential term in~\eqref{eq:traj_violation} shows that, in our conditioned setting, shorter segments suppress the probability of unsafe deviations under the bridge approximation.


\section{Numerical Experiments}
\label{sec:exp}

\begin{figure}[t]
  \centering
  \includegraphics[width=0.83\linewidth]{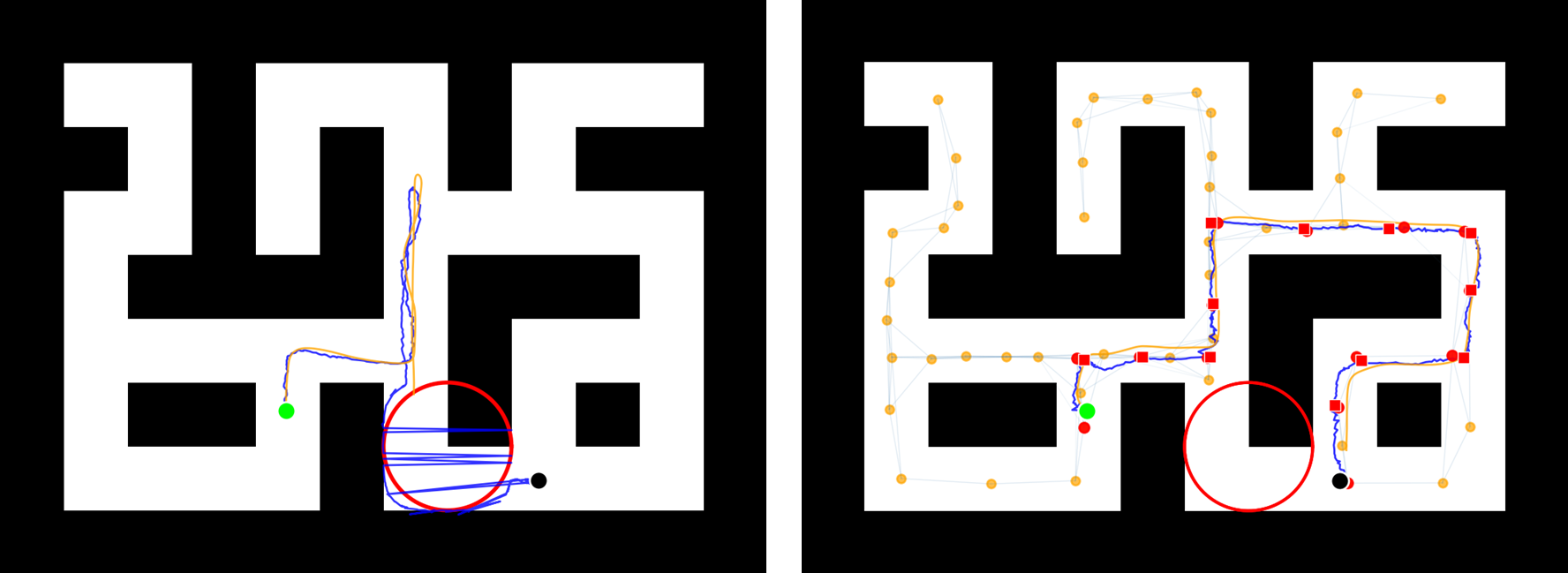}
  \caption{\textbf{SafeDiffuser vs. G2SD in Maze2D.}
  Green and black dots denote start and goal, and the red circle is unsafe.
  \textbf{Left:} SafeDiffuser produces a kinematically infeasible plan that pierces the unsafe region while attempting to avoid both the obstacle and maze walls.
  \textbf{Right:} G2SD plans over the offline latent graph, where orange dots and gray lines denote nodes and edges, and red squares denote selected graph nodes. Conditioning diffusion on these waypoints yields a physically feasible and safe trajectory.}
  \vspace*{1mm}
  \label{fig:compare}
\end{figure}

We benchmark G2SD on three settings: maze navigation (Maze2D-large-v1)~\citep{fu2020d4rl} and robot locomotion (Walker2D-v2, Hopper-v2)~\citep{brockman2016openai}.
We ask:
\textbf{(Q1)} does G2SD avoid unsafe regions during generation, and
\textbf{(Q2)} does it preserve \emph{dynamic feasibility} better than reactive gradient/projection-based safety mechanisms?
We compare against Diffuser~\citep{janner2022diffuser}, Truncated~\citep{brockman2016openai}, Classifier Guidance (CG)~\citep{dhariwal2021diffusion}, SafeDiffuser~\citep{xiao2023safediffuser} using ROS, and SHD~\citep{chen2024simple} using HD-DA.
Each baseline is elaborated in Appendix~\ref{app:baseline} and relevant hyperparameters with computational detail are in Appendix~\ref{app:hyperparameters}.

\subsection{Maze2D Navigation}
\label{sec:exp:maze2d}

\textbf{Setting.}
Maze2D requires an agent to move from start to goal while avoiding maze walls and unsafe regions. Unlike prior fixed start/goal protocols, we randomize start and goal pairs to stress-test performance across varying query distances. All methods are evaluated on 100 episodes with a single unsafe region near a narrow corner.

\textbf{Metrics.}
We report two primary metrics for Q1--Q2. \textit{Safety-SPEC} is the minimum signed distance from the synthesized path to the unsafe-region boundary; negative values indicate penetration, while positive values indicate safety margin. \textit{Success Rate} is the fraction of executed episodes that reach the goal without failure. Failures are decomposed into wall collision ($\text{Fail}_{\text{wall}}$), unsafe-region collision ($\text{Fail}_{\text{obst}}$), and timeout ($\text{Fail}_{\text{time}}$). We additionally report \textit{Trajectory Steps} ($\mu\pm\sigma$) and planning \textit{Time}.

\textbf{Results.}
Table~\ref{tab:maze2d} shows that methods without explicit safety handling (Diffuser/SHD) tend to enter the unsafe region (Safety-SPEC $\approx -0.9$).
Truncated yields near-boundary solutions, and SafeDiffuser achieves only a small positive margin consistent with its QP slack setting (Safety-SPEC $=0.010$).
In contrast, G2SD obtains the largest safety margin (Safety-SPEC $=0.770$), indicating that unsafe-node pruning guides the selected graph route to detour around unsafe regions, rather than relying on local corrections near their boundaries, addressing Q1.
G2SD also achieves the highest success rate, reducing all failure types: wall collisions, unsafe collisions, and timeouts.
This supports Q2, as graph-guided topological planning better preserves dynamic feasibility.
Finally, G2SD adapts its horizon to graph distance, producing shorter paths ($219.5\pm138.9$ steps) than fixed-horizon baselines ($384/372$ steps), while keeping planning time ($0.99$s) comparable to standard diffusion ($0.70$s) and faster than SafeDiffuser ($40.69$s).

\begin{table}[t]
\centering
\small
\setlength{\tabcolsep}{5pt}
\renewcommand{\arraystretch}{1.05}
\begin{tabularx}{0.95\textwidth}{l >{\centering\arraybackslash}X >{\centering\arraybackslash}X c c}
\toprule
\textbf{Method}
& \textbf{Success Rate} ($\uparrow$) \newline {\scriptsize (Fail$_\text{wall}$ / Fail$_\text{obst}$ / Fail$_\text{time}$)}
& \textbf{Safety-SPEC} $(\uparrow)$
& \textbf{Steps} \newline {\scriptsize ($\mu \pm \sigma$)}
& \textbf{Time} {\scriptsize (s)}  ($\downarrow$) \\
\midrule
Diffuser{\scriptsize~\citep{janner2022diffuser}}
& 0.52 {\scriptsize (0.26 / 0.21 / 0.01)}
& $-0.924$
& 384.0 $\pm$ 0.0
& 0.70 \\
Truncated{\scriptsize~\citep{brockman2016openai}}
& 0.37 {\scriptsize (0.30 / 0.33 / 0.00)}
& $-8.941 e^{-7}$
& 384.0 $\pm$ 0.0
& 2.02 \\
Classifier Guidance{\scriptsize~\citep{dhariwal2021diffusion}}
& 0.52 {\scriptsize (0.26 / 0.21 / 0.01)}
& $-0.922$
& 384.0 $\pm$ 0.0
& 0.73 \\
SafeDiffuser{\scriptsize~\citep{xiao2023safediffuser}}
& 0.36 {\scriptsize (0.40 / 0.23 / 0.01)}
& $0.010$
& 384.0 $\pm$ 0.0
& 40.69 \\
SHD{\scriptsize~\citep{chen2024simple}}
& 0.46 {\scriptsize (0.19 / 0.32 / 0.00)}
& $-0.985$
& 372.0 $\pm$ 0.0
& 1.02 \\
\textbf{G2SD (Ours)}
& \textbf{0.98} {\scriptsize (0.02 / 0.00 / 0.00)}
& $\textbf{0.770}$
& 219.5 $\pm$ 138.9
& 0.99 \\
\bottomrule
\end{tabularx}
\vspace{2mm}
\caption{\textbf{Maze2D-large-v1.}
G2SD achieves the best Safety-SPEC and Success Rate, with the lowest wall, unsafe-set, and timeout failures.
Unlike fixed-horizon baselines, it adapts trajectory length to graph distance and remains much faster than baselines.}
\vspace{-2mm}
\label{tab:maze2d}
\end{table}

\subsection{Robot Locomotion}

\textbf{Setting.}
We evaluate two high-dimensional locomotion tasks, Walker2D-v2 and Hopper-v2.
The agent must walk or hop stably without falling, while respecting height limits and avoiding physically impossible motions. Consistent with prior work~\citep{xiao2023safediffuser}, we evaluate both tasks under receding horizon control (RHC), which replans full trajectories for single-action execution. We extend G2SD to online inference with \textit{hybrid diffusion stitching}: diffusion generates the immediate transition to the first subgoal as an online bridge, while the remaining long-horizon tail is retrieved from cached graph edges. Details are in Appendix~\ref{app:inference_details}.

\textbf{Metrics.}
We report Q1--Q2 metrics: \textit{Safety-SPEC}, the minimum safety margin, and \textit{Score}, the normalized cumulative reward. Planning \textit{Time} is reported as a secondary diagnostic.

\textbf{Results.}
Table~\ref{tab:locomotion} summarizes results over 50 episodes.
G2SD achieves the largest planned safety margins on both Walker2D and Hopper, confirming Q1.
Reactive safety baselines such as Truncated and SafeDiffuser concentrate at zero margin, while other methods yield negative margins.
G2SD also obtains the highest Score in both environments, indicating that the latent graph helps satisfy safety requirements while maintaining the learned motion manifold, addressing Q2.
For efficiency, we optimize SHD with parallel batch generation of all subsegments; even against this strengthened hierarchical baseline ($0.91$--$0.92$s), G2SD with hybrid diffusion stitching runs faster ($0.38s$).

\begin{table}[t]
\centering
\small
\vspace{1mm}
\resizebox{0.78\linewidth}{!}{%
\begin{tabular}{@{}llccc@{}}
\toprule
\textbf{Task} & \textbf{Method} & \textbf{Score} ($\uparrow$) & \textbf{Safety-SPEC} ($\uparrow$) & \textbf{Time (s)} ($\downarrow$) \\
\midrule
\textbf{Walker2D}
& Diffuser~\citep{janner2022diffuser} & 0.349 $\pm$ 0.134 & -7.405 & 0.48 \\
& Truncated~\citep{brockman2016openai} & 0.316 $\pm$ 0.100 & 0.0 & 1.72 \\
& Classifier Guidance~\citep{dhariwal2021diffusion} & 0.345 $\pm$ 0.083 & -0.297 & 1.76 \\
& SafeDiffuser~\citep{xiao2023safediffuser} & 0.332 $\pm$ 0.089 & 0.0 & 2.04 \\
& SHD~\citep{chen2024simple} & 0.494 $\pm$ 0.005 & -0.1292 & 0.92 \\
& \textbf{G2SD (Ours)} & \textbf{0.497 $\pm$ 0.007} & \textbf{1.269} & 0.38 \\
\midrule
\textbf{Hopper}
& Diffuser~\citep{janner2022diffuser} & 0.444 $\pm$ 0.058 & -1.597 & 0.52 \\
& Truncated~\citep{brockman2016openai} & 0.438 $\pm$ 0.059 & 0.0 & 1.86 \\
& Classifier Guidance~\citep{dhariwal2021diffusion} & 0.437 $\pm$ 0.086 & -1.842 & 1.78 \\
& SafeDiffuser~\citep{xiao2023safediffuser} & 0.442 $\pm$ 0.065 & 0.0 & 2.10 \\
& SHD~\citep{chen2024simple} & 0.429 $\pm$ 0.040 & -0.127 & 0.91 \\
& \textbf{G2SD (Ours)} & \textbf{0.528 $\pm$ 0.021} & \textbf{0.715} & 0.38 \\
\bottomrule
\end{tabular}%
}
\vspace{2mm}
\caption{\textbf{Walker2D and Hopper.}
 G2SD achieves the best Safety-SPEC and Score, with the fastest inference among safety-aware baselines. Hybrid diffusion stitching also reduces planning time compared to the baselines.}
\label{tab:locomotion}
\vspace{-2mm}
\end{table}

\subsection{Ablation Studies}
\label{sec:ablation}

\textbf{VQ-VAE Usage Weight.} 
We study how $\alpha_{\text{usage}}$ in the VQ-VAE (Section~\ref{sec:graph_construct}) affects effective graph size, measured by \textit{Valid Nodes}, and G2SD performance on Maze2D-large-v1, measured by Success Rate and Safety-SPEC. Valid Nodes counts codebook indices actually selected by the encoder, directly diagnosing codebook collapse: when only a few codes are used, many entries become dead and the latent graph becomes coarse. This collapse can be self-reinforcing, since early small advantages make a few codes more likely to be selected again, concentrating assignments into a subset. Table~\ref{tab:alpha_usage} shows non-monotonic behavior. With $\alpha_{\text{usage}}=0$, codebook collapse limits graph coverage and reduces success. Increasing $\alpha_{\text{usage}}$ grows the number of Valid Nodes, but performance does not monotonically improve, showing that node count alone is insufficient. The drop at $\alpha_{\text{usage}}=0.5$ indicates that transition reliability and edge connectivity matter more than raw node quantity. The best regime is $\alpha_{\text{usage}}\in[0.3,0.4]$, where the codebook balances coverage with kinematically meaningful prototypes and well-supported transitions.

\textbf{Additional Ablations.} In Appendix~\ref{sec:edge_transition}, Table~\ref{tab:connector_ablation} shows that G2SD is robust to graph edge-weight choices and compares diffusion bridges with geometric connectors (e.g., interpolators, splines). In particular, we observed geometric connectors fail to produce dynamically-feasible locomotion, confirming the need for data-supported local diffusion bridges.

\begin{table}[t]
\centering
\small
\resizebox{0.5\linewidth}{!}{%
\begin{tabular}{c c c c}
\toprule
$\alpha_{\text{usage}}$ & Valid Nodes & Success Rate & Safety-SPEC \\
\midrule
0.00 & 20  & 85\%  & 0.653 \\
0.10 & 100 & 91\%  & 0.814 \\
0.20 & 85  & 91\%  & 0.794 \\
0.30 & 70  & 97\%  & 0.747 \\
\textbf{0.40} & 60 & \textbf{98\%} & 0.770 \\
0.50 & 61  & 87\%  & 0.779 \\
0.60 & 52  & 94\%  & 0.876 \\
0.70 & 43  & 87\%  & 0.518 \\
0.80 & 44  & 88\%  & 0.885 \\
\bottomrule
\end{tabular}%
}
\vspace{2mm}
\caption{\textbf{Ablation on VQ-VAE usage weight.}
Performance is non-monotonic in $\alpha_{\text{usage}}$; the best regime is around $[0.3,0.4]$, where codebook usage yields meaningful nodes and edges.}
\label{tab:alpha_usage}
\vspace{-2mm}
\end{table}

\section{Limitations and Future Work}
G2SD is designed for navigation and locomotion, where similar robot states have similar feasible next transitions. In manipulation, robot configuration, object state, contact mode, and task context should not be abstracted at the same flat level, since similar robot states may have fundamentally different transition structures. A flat state-centric VQ graph may therefore merge contexts with incompatible outgoing transitions. Extending G2SD to manipulation may require a hierarchical or context-aware graph representation that preserves such task-dependent information.

G2SD currently uses Dijkstra search to select a single route for efficiency and reliability, limiting route-level diversity. Future work could use \(k\)-best or stochastic graph search to enable controllable route-level multimodality beyond the single-path setting studied here.

Finally, short conditioned bridges reduce segment-level violation probability under our bound, but do not guarantee $100\%$ safety under all obstacle configurations. Since reactive corrections inside the diffusion planner can reintroduce manifold rupture and produce dynamically infeasible trajectories, safety can be strengthened through execution-time safety filters (e.g., CBF). 
Combining G2SD with such controllers is a natural direction for future work.

\section{Conclusion}
\label{sec:conclusion}
This paper showed that reactive inference-time safety gradients in long-horizon diffusion planning can break dynamical feasibility through manifold rupture. To address this, we proposed \textbf{Graph-Guided Safe Diffuser (G2SD)}, a hierarchical planner that enforces safety structurally via high-level planning over a safety-pruned latent transition graph and uses conditioned diffusion to generate short bridges between consecutive graph-grounded states while remaining anchored to data-supported transitions. Our theoretical analyses explained the failures of reactive safety guidance and showed that decomposing long-horizon generation into anchored segments exponentially reduces safety violation probability. Experiments across navigation and locomotion benchmarks demonstrate that G2SD achieves substantially higher success rates, larger safety margins, and lower inference cost than existing safe diffusion baselines. 

\clearpage
\bibliography{example}

@article{yu2024trajectory,
  title={Trajectory diffusion for objectgoal navigation},
  author={Yu, Xinyao and Zhang, Sixian and Song, Xinhang and Qin, Xiaorong and Jiang, Shuqiang},
  journal={Advances in Neural Information Processing Systems},
  volume={37},
  pages={110388--110411},
  year={2024}
}

@article{carvalho2025motion,
  title={Motion planning diffusion: Learning and adapting robot motion planning with diffusion models},
  author={Carvalho, Joao and Le, An T and Kicki, Piotr and Koert, Dorothea and Peters, Jan},
  journal={IEEE Transactions on Robotics},
  year={2025},
  publisher={IEEE}
}

@inproceedings{seo2025presto,
  title={Presto: Fast motion planning using diffusion models based on key-configuration environment representation},
  author={Seo, Mingyo and Cho, Yoonyoung and Sung, Yoonchang and Stone, Peter and Zhu, Yuke and Kim, Beomjoon},
  booktitle={2025 IEEE International Conference on Robotics and Automation (ICRA)},
  pages={10861--10867},
  year={2025},
}

@inproceedings{xiao2023safediffuser,
  title={{SafeDiffuser}: Safe planning with diffusion probabilistic models},
  author={Xiao, Wei and Wang, Tsun-Hsuan and Gan, Chuang and Hasani, Ramin and Lechner, Mathias and Rus, Daniela},
  booktitle={The Eleventh International Conference on Learning Representations},
  year={2023}
}

@inproceedings{yang2025safeflowmatcher,
  title={{SafeFlowMatcher}: Safe and Fast Planning using Flow Matching with Control Barrier Functions},
  author={Yang, Jeongyong and Jang, Seunghwan and Han, SooJean},
  booktitle={The Fourteenth International Conference on Learning Representations},
  year={2026},
  note={To appear}
}

@inproceedings{mizuta2024cobl,
  title={{CoBL-Diffusion}: {D}iffusion-based conditional robot planning in dynamic environments using control barrier and {L}yapunov functions},
  author={Mizuta, Kazuki and Leung, Karen},
  booktitle={2024 IEEE/RSJ International Conference on Intelligent Robots and Systems (IROS)},
  pages={13801--13808},
  year={2024},
}

@inproceedings{chengsafe,
  title={Safe and Stable Control via {L}yapunov-Guided Diffusion Models},
  author={Cheng, Xiaoyuan and Tang, Xiaohang and Yang, Yiming},
  booktitle={The Thirty-ninth Annual Conference on Neural Information Processing Systems},
  year={2025}
}

@inproceedings{chen2024simple,
  title={Simple Hierarchical Planning with Diffusion},
  author={Chen, Chang and Deng, Fei and Kawaguchi, Kenji and Gulcehre, Caglar and Ahn, Sungjin},
  booktitle={The Twelfth International Conference on Learning Representations}
}

@inproceedings{kaelbling2011hpn,
  title     = {Hierarchical Task and Motion Planning in the Now},
  author    = {Kaelbling, Leslie Pack and Lozano-P{\'e}rez, Tom{\'a}s},
  booktitle = {2011 IEEE International Conference on Robotics and Automation (ICRA)},
  year      = {2011},
  pages     = {1470--1477},
  doi       = {10.1109/ICRA.2011.5980391}
}

@article{garrett2021tamp,
  title   = {Integrated Task and Motion Planning},
  author  = {Garrett, Caelan Reed and Holladay, Rachel and Kaelbling, Leslie Pack and Lozano-P{\'e}rez, Tom{\'a}s},
  journal = {Annual Review of Control, Robotics, and Autonomous Systems},
  year    = {2021},
  volume  = {4},
  pages   = {265--293},
  doi     = {10.1146/annurev-control-091420-084139}
}

@inproceedings{seker2019cnmp,
  title     = {Conditional Neural Movement Primitives},
  author    = {Seker, Mert and Imre, Mert and Piater, Justus and {\v{S}}ar{\'\i}{\'c}, Jo{\v{s}}ko},
  booktitle = {Robotics: Science and Systems (RSS)},
  year      = {2019},
  doi       = {10.15607/RSS.2019.XV.071}
}

@article{lavalle2001rrt,
author = {Lavalle, Steven and Kuffner, James},
year = {2000},
month = {01},
pages = {},
title = {Rapidly-Exploring Random Trees: Progress and Prospects},
journal = {Algorithmic and computational robotics: New directions}
}

@article{Dijkstra1959,
  author  = {Dijkstra, Edsger W.},
  title   = {A note on two problems in connexion with graphs},
  journal = {Numerische Mathematik},
  year    = {1959},
  volume  = {1},
  pages   = {269--271},
  doi     = {10.1007/BF01386390}
}

@inproceedings{blochliger2018topomap,
  title     = {{TopoMap}: Topological Mapping and Navigation Based on Visual {SLAM} Maps},
  author    = {Bl{\"o}chliger, Fabian and Fehr, Marius and Dymczyk, Marcin and Schneider, Thomas and Siegwart, Roland},
  booktitle = {2018 IEEE International Conference on Robotics and Automation (ICRA)},
  year      = {2018},
  doi       = {10.1109/ICRA.2018.8460641}
}

@inproceedings{qureshi2019mpnet,
  title     = {Motion Planning Networks},
  author    = {Qureshi, Ahmed H. and Bency, Mayur J. and Yip, Michael C.},
  booktitle = {2019 International Conference on Robotics and Automation (ICRA)},
  year      = {2019},
  pages     = {2118--2124},
  doi       = {10.1109/ICRA.2019.8793889}
}

@article{saroya2021roadmap,
  title   = {Roadmap Learning for Probabilistic Occupancy Maps with Topology-Informed Growing Neural Gas},
  author  = {Saroya, Manan and Best, Graeme and Hollinger, Geoffrey A.},
  journal = {IEEE Robotics and Automation Letters},
  year    = {2021},
  volume  = {6},
  number  = {3},
  pages   = {4805--4812},
}

@inproceedings{eysenbach2019sorb,
  title     = {Search on the Replay Buffer: Bridging Planning and Reinforcement Learning},
  author    = {Eysenbach, Benjamin and Levine, Sergey},
  booktitle = {Advances in Neural Information Processing Systems},
  year      = {2019},
  volume    = {32},
  pages     = {15246--15257}
}

@inproceedings{kolmogorov_cdf,
  title={Sulla determinazione emp{\'i}rica di uma legge di distribuzione},
  author={A. Kolmogorov},
  year={1933},
  booktitle={Biblioteca Nazionale di Roma}
}

@inproceedings{janner2022diffuser,
  title     = {Planning with Diffusion for Flexible Behavior Synthesis},
  author    = {Janner, Michael and Du, Yilun and Tenenbaum, Joshua B. and Levine, Sergey},
  booktitle = {Proceedings of the 39th International Conference on Machine Learning (ICML)},
  year      = {2022},
}

@inproceedings{wang2022diffusion,
  title         = {Diffusion Policies as an Expressive Policy Class for Offline Reinforcement Learning},
  author        = {Wang, Zhendong and Hunt, Jonathan J. and Zhou, Mingyuan},
  booktitle     = {International Conference on Learning Representations (ICLR)},
  year          = {2023},
  eprint        = {2208.06193},
  archivePrefix = {arXiv},
}

@article{chi2025diffusion,
  title={Diffusion policy: Visuomotor policy learning via action diffusion},
  author={Chi, Cheng and Xu, Zhenjia and Feng, Siyuan and Cousineau, Eric and Du, Yilun and Burchfiel, Benjamin and Tedrake, Russ and Song, Shuran},
  journal={The International Journal of Robotics Research},
  volume={44},
  number={10-11},
  pages={1684--1704},
  year={2025},
  publisher={Sage Publications Sage UK: London, England}
}

@article{dhariwal2021diffusion,
  title={Diffusion models beat gans on image synthesis},
  author={Dhariwal, Prafulla and Nichol, Alexander},
  journal={Advances in neural information processing systems},
  volume={34},
  pages={8780--8794},
  year={2021}
}

@inproceedings{oord2017vqvae,
  title         = {Neural Discrete Representation Learning},
  author        = {van den Oord, A{\"a}ron and Vinyals, Oriol and Kavukcuoglu, Koray},
  booktitle     = {Advances in Neural Information Processing Systems (NeurIPS)},
  year          = {2017},
  eprint        = {1711.00937},
  archivePrefix = {arXiv},
}

@article{brockman2016openai,
  title={Openai gym},
  author={Brockman, Greg and Cheung, Vicki and Pettersson, Ludwig and Schneider, Jonas and Schulman, John and Tang, Jie and Zaremba, Wojciech},
  journal={arXiv preprint arXiv:1606.01540},
  year={2016}
}

@article{fu2020d4rl,
  title={D4rl: Datasets for deep data-driven reinforcement learning},
  author={Fu, Justin and Kumar, Aviral and Nachum, Ofir and Tucker, George and Levine, Sergey},
  journal={arXiv preprint arXiv:2004.07219},
  year={2020}
}

\clearpage
\appendix
\section{Task-Specific Cost Formulation}
\label{app:Cost_Formulation}

In Section~\ref{subsec:onlineplanning}, we formulated the edge weight $W(z_i, z_j)$ as a combination of a task-specific geometric cost $c(z_i, z_j)$ and the transition cost. This appendix details how we define $c(z_i, z_j)$ to incorporate domain objectives.

\textit{Maze2D Navigation: Distance-Based Cost.}
For navigation, the primary objective is to reach the goal efficiently. Since the transition probability term $-\log P(z_j|z_i)$ already penalizes traversing low-density (rare) edges, we define the geometric cost simply as the Euclidean distance:
\begin{equation}
    c_{\text{nav}}(z_i, z_j) \coloneqq \| D(z_i) - D(z_j) \|_2,
\end{equation}
where function $D$ is a decoder that translates the node to its representative position in state space.
Minimizing this cost encourages the planner to find the shortest geometric path, while the weighted transition probability term encourages the path to remain on the high-density data manifold.

\textit{Locomotion: Value and Density-Aware Cost.}
For locomotion tasks (e.g., Hopper, Walker2D), we construct a cost that balances expected return with data support density. $V_{\text{RL}}(s)$ estimates the expected return via offline RL, while $N(z)$ denotes the empirical visitation count of node $z$ in the training dataset, which serves as a density estimate for the data manifold.

We define the node utility $V(z)$ as a composite score:
\begin{equation}
    V(z) \coloneqq \underbrace{V_{\text{RL}}(D_\psi(z))}_{\text{Return}} + \underbrace{\beta \log N(z)}_{\text{Density}}.
\end{equation}
The geometric cost $c_{\text{loco}}(z_i, z_j)$ is then formulated to favor transitions to nodes with higher utility:
\begin{equation}
    c_{\text{loco}}(z_i, z_j) \coloneqq -  \left( V(z_j) - V(z_i) \right),
\end{equation}
where $\beta > 0$ is a weighting coefficients.
By minimizing this cost (maximizing utility gain), the planner is guided towards states that are both high RL value and high visitation count, explicitly preferring the high-density regions of the expert manifold.

\section{Experiment Details}
\label{app:exp_details}

\subsection{Environment Setup}
\textit{Maze2D Navigation.} We use the \texttt{maze2d-large-v1} environment from D4RL. The state space is 4-dimensional, consisting of position $(x, y)$ and velocity $(\dot{x}, \dot{y})$. The action space is 2-dimensional continuous control.

For safety evaluation, we introduce a circular unsafe region centered at $(x_c, y_c) = (6.5, 1.5)$ with radius $r = 1.0$. Episodes are sampled with random start and goal positions, subject to: (i) avoiding wall collisions, (ii) maintaining clearance from the unsafe region. We evaluate on 100 episodes with identical random seeds across all methods.

\textit{Locomotion.} 
We use OpenAI Gym walker2d-medium-expert-v2 and hopper-medium-expert-v2 environments from MuJoCo. The Walker2D agent operates within a 17-dimensional state space governed by a 6-dimensional action space, whereas Hopper involves an 11-dimensional state space and a 3-dimensional action space.

For safety, we set the roof limit to $h_r = 1.4$ for Walker2D and $h_r = 1.6$ for Hopper to prevent the agent from jumping dangerously high or experiencing excessive vertical instability.



\subsection{Performance Metrics}

\textit{Safety-SPEC.}
To strictly measure constraint satisfaction, we compute the minimum value of the safety margin over the entire trajectory $\boldsymbol{\tau}$ of length $H$. This metric captures the worst-case safety margin:
\begin{equation}
\text{Safety-SPEC}(\boldsymbol{\tau}) = \min_{h=0 \dots H-1}\phi(\mathbf{s}_h),
\end{equation}
where $\phi(\cdot)$ denotes the signed safety margin function (positive if safe).
The agent satisfies the safety constraint when Safety-SPEC is positive and violates the constraint if negative.


\textit{Success Rate \& Failure Decomposition.}
This metric reflects the execution performance on maze2d experiments.
An episode is considered a success only if the agent reaches the goal within the time limit without any safety violations and collisions with the maze walls. We define the success rate as:
\begin{equation}
\text{Success} = \frac{1}{E} \sum_{i=1}^{E} \mathbb{I}\left( \text{Goal}_i \land \neg \text{Fail}_i \right),
\end{equation}
where $E$ is the number of episodes (100 tested for main experiment), and failures are decomposed into wall collision ($\text{Fail}_{\text{wall}}$), unsafe-region penetration ($\text{Fail}_{\text{obst}}$), and timeout ($\text{Fail}_{\text{time}}$).

\subsection{Baseline Setups} \label{app:baseline}
We reproduced all baselines in-house by closely following the official implementations and evaluation protocols released by prior work. In particular, we used the public codebases of Diffuser~\citep{janner2022diffuser}, SafeDiffuser~\citep{xiao2023safediffuser}, and Simple Hierarchical Diffusion (SHD)~\citep{chen2024simple} as references, and aligned preprocessing and inference settings with the respective repositories to ensure a fair comparison:

\noindent\textbf{Diffuser:} \url{https://github.com/jannerm/diffuser}\\
\textbf{SafeDiffuser:} \url{https://github.com/Weixy21/SafeDiffuser}\\
\textbf{SHD:} \url{https://github.com/changchencc/Simple-Hierarchical-Planning-with-Diffusion}

For SafeDiffuser, we used the QP-based solver and the ROS configuration provided in the authors' code. Our implementations of Truncated~\citep{brockman2016openai} and Classifier Guidance (CG)~\citep{dhariwal2021diffusion} follow the SafeDiffuser codebase, adopting the same inference-time safety operations to ensure consistent constraint handling across methods.

\subsection{Experimental Hyperparameters}\label{app:hyperparameters}

\providecommand{\tbd}{\textcolor{red}{\texttt{TBD}}}

Maze2D experiments were run on an NVIDIA RTX 4060 Ti GPU with 8GB VRAM and Intel i7-14700K CPU; locomotion experiments ran on an AMD EPYC 9354 CPU and NVIDIA RTX 4090 GPU with 24GB VRAM.

Tables~\ref{tab:app_hparams} and~\ref{tab:app_train_hparams} summarize the hyperparameters used across all experiments. For high-dimensional locomotion tasks (Hopper and Walker2D), the inherent complexity of the state space encourages diverse codebook utilization, rendering the auxiliary usage loss ($\alpha_{\text{usage}}$) unnecessary.
For the low-level planner, we directly used the pretrained weights from SafeDiffuser~\citep{xiao2023safediffuser} with a horizon of $N=32$.

\begin{table}[h]
\centering
\small
\begin{tabularx}{0.80\linewidth}{l X c c c}
\toprule
Component & Hyperparameter & Maze2D & Hopper & Walker2D \\
\midrule
VQ-VAE & Codebook size $Z$ & 256 & 256 & 512 \\
VQ-VAE & Latent dimension $D$ & 16 & 16 & 16 \\
VQ-VAE & Hidden dimension & 128 & 128 & 128 \\
VQ-VAE & Commitment weight $\alpha_{\text{commit}}$ & 0.25 & 0.25 & 0.25 \\
VQ-VAE & Usage weight $\alpha_{\text{usage}}$ & 0.4 & - & - \\
\addlinespace
Graph planner & Edge weights $(\lambda_{\text{prob}},\lambda_{\text{dist}})$ & $(1.0,0.2)$ & (1.0,0.1) & (1.0,0.1) \\
\addlinespace
Diffusion & Denoising steps $T_{\text{diff}}$ & 256 & 20 & 20 \\
Diffusion & Segment length $N$ (steps) & 32 & 32 & 20 \\
Value guidance & Discount factor & - & 0.99 & 0.99 \\
Value guidance & Guidance scale / weight & - & 0.1 & 0.001 \\
\bottomrule
\end{tabularx}
\vspace{1mm}
\caption{Design hyperparameters.}
\label{tab:app_hparams}
\end{table}

\begin{table}[h]
\centering
\small
\begin{tabularx}{0.85\linewidth}{l l X l l l}
\toprule
Module & Dataset & Optimizer & Learning rate & Batch size & Epochs \\
\midrule
\multirow{3}{*}{VQ-VAE} 
  & Maze2D & Adam & $1 \times 10^{-4}$ & 256 & 50 \\
  & Hopper & Adam & $1 \times 10^{-3} \to 1 \times 10^{-4}$ \textsuperscript{\dag} & 512 & 120 \\
  & Walker2D & Adam & $1 \times 10^{-3} \to 1 \times 10^{-4}$ \textsuperscript{\dag} & 512 & 120 \\
\midrule
\multirow{1}{*}{Diffusion} 
  & Maze2D & Adam & $2 \times 10^{-4}$ & 256 & 200 \\
\bottomrule
\multicolumn{6}{l}{\footnotesize \textsuperscript{\dag} Step decay: LR is reduced to $5 \times 10^{-4}$ at epoch 60 and $1 \times 10^{-4}$ at epoch 90.}
\end{tabularx}
\vspace{1mm}
\caption{Training hyperparameters.}
\label{tab:app_train_hparams}
\end{table}

\section{Implementation Details of Online Inference}
\label{app:inference_details}
In this section, we detail the extension of G2SD to receding horizon high-dimensional control (i.e., online inference).
We specifically address the challenges of execution drift where the realized state deviates from the model's prediction during online recomputation.

\textit{High Level Planning for Safety Recovery.}
First, we map the current state to the nearest discrete node $z_{curr}$ in the total graph $\mathcal{G}_\text{total}$. However, a fundamental challenge in high-dimensional control is the inevitability of execution drift. Due to intrinsic stochasticity in system dynamics or unmodeled environmental perturbations, the robot may naturally deviate into states mapping to unsafe nodes ($z_{curr} \notin \mathcal{G}_\text{safe}$).
Since these nodes are pruned from the planning subgraph, standard graph search would fail to initialize.
To resolve this, we trigger a recovery mechanism to identify a valid re-entry point $z_{start} \in \mathcal{G}_\text{safe}$.
We search for immediate outgoing neighbors of $z_{curr}$ in the original total graph $\mathcal{G}_\text{total}$ that reside within the safe subgraph $\mathcal{G}_\text{safe}$, selecting the highest-scoring candidate to prioritize kinematically feasible transitions (i.e., observed in the dataset).
Once a valid start node is established, we perform Dijkstra's algorithm on $\mathcal{G}_\text{safe}$ to find the optimal node sequence $\boldsymbol{\tau}_\text{graph}^*$ (the high-level plan) that minimizes the cumulative edge weight defined in~\eqref{eq:edge_weight}.

\textit{Hybrid Diffusion Stitching.}
To enable real-time receding horizon planning and control (RHC) for locomotion experiments, we propose a hybrid stitching strategy that minimizes computational cost.
A fundamental challenge in diffusion-based control is that reverse denoising is computationally expensive.
Since RHC only executes the first action of the planned trajectory and omits the rest at each cycle, regenerating the full long-horizon path from scratch is highly redundant.
We decouple the generation process into an online immediate segment and a cached future tail. Specifically, the first transition connecting the current state $\mathbf{s}_{curr}$ to the first subgoal $z_1$ is generated in real-time using the conditional diffusion model, acting as a feedback bridge to steer the robot back to the manifold. Conversely, the subsequent path ($z_1 \to z_2 \dots$) utilizes pre-computed trajectories from the graph edges $\mathcal{E}$.
Crucially, these edge trajectories are retrieved from cached solutions generated in prior planning iterations, rather than being recomputed at every control cycle.
This approach maintains high-frequency feedback control via the online bridge while ensuring long-horizon consistency through the cached look-ahead at minimal computation.


\section{VQ-VAE Training Details}
\label{app:vqvae_details}

\textit{Training Objective}.
A VQ-VAE architecture~\citep{oord2017vqvae} consists of an encoder $E_\phi$, a decoder $D_\psi$, and a learnable codebook $\mathcal{C} = \{e_z\}_{z=1}^Z \subset \mathbb{R}^D$, where $Z$ is the codebook size. The training objective comprises four terms: a reconstruction loss for accurate state encoding, a codebook loss to update codebook entries, a commitment loss to anchor encoder outputs to codebook entries, and a usage loss to promote uniform codebook utilization. The total loss $\mathcal{L}_{\text{total}}$ is defined as:
\begin{equation}
    \mathcal{L}_{\text{total}} = \mathcal{L}_{\text{recon}} + \mathcal{L}_{\text{codebook}} + \alpha_{\text{commit}} \mathcal{L}_{\text{commit}} + \alpha_{\text{usage}} \mathcal{L}_{\text{usage}},
\end{equation}
where $\alpha_{\text{commit}}$ and $\alpha_{\text{usage}}$ are hyperparameters weighting the commitment and usage terms, respectively.

\textit{Reconstruction Loss ($\mathcal{L}_{\text{recon}}$).}
This term ensures that the discretized latent representation retains the essential kinematic information of the input state $\mathbf{s}$. It is computed as the mean squared error between the input and the reconstructed state:
\begin{equation}
    \mathcal{L}_{\text{recon}} = \| \mathbf{s} - D_\psi(z_q) \|_2^2,
\end{equation}
where $z_q$ is the quantized latent vector from the codebook.

\textit{Codebook Loss ($\mathcal{L}_{\text{codebook}}$).}
To update the codebook entries, we use an embedding loss that pulls codebook vectors toward the encoder outputs:
\begin{equation}
    \mathcal{L}_{\text{codebook}} = \| z_q - \text{sg}[z_e] \|_2^2,
\end{equation}
where $z_e = E_\phi(s)$ is the encoder output and $\text{sg}[\cdot]$ denotes the stop-gradient operator.

\textit{Commitment Loss ($\mathcal{L}_{\text{commit}}$)}.
To prevent the encoder output from fluctuating arbitrarily, we use the standard commitment loss which pulls the encoder embeddings towards the nearest codebook entries:
\begin{equation}
    \mathcal{L}_{\text{commit}} = \| z_e - \text{sg}[z_q] \|_2^2.
\end{equation}

\textit{Usage Regularization ($\mathcal{L}_{\text{usage}}$)}.
To address the codebook collapse problem where only a small subset of codes is utilized, we introduce a usage regularization term. We first compute a soft assignment distribution over codebook entries using a temperature parameter $\tau_\text{temp}$:
\begin{equation}
    p_{iz} = \frac{\exp(-\|z_e^{(i)} - e_z\|^2 / \tau_\text{temp})}{\sum_{z'=1}^{Z} \exp(-\|z_e^{(i)} - e_{z'}\|^2 / \tau_\text{temp})},
\end{equation}
where $z_e^{(i)}$ is the encoder output for the $i$-th sample in a batch. Let $\bar{q} \in \mathbb{R}^Z$ be the batch-averaged distribution, i.e., $\bar{q}_z = \frac{1}{B} \sum_{i=1}^{B} p_{iz}$. We define $\mathcal{L}_{\text{usage}}$ as the KL divergence from the empirical distribution to a uniform prior:
\begin{equation}
    \mathcal{L}_{\text{usage}} = D_{\text{KL}}(\bar{q} \,\|\, \mathcal{U}) = \sum_{z=1}^{Z} \bar{q}_z \log(Z \cdot \bar{q}_z).
\end{equation}
Minimizing this term encourages the model to utilize all codebook entries uniformly.

\section{Empirical Support for the Theoretical Assumptions}
\label{app:assumption_support}

\textbf{Dominance condition in Theorem~\ref{thm:rupture}.}
The condition in~\eqref{eq:dominance_condition} is a sufficient condition for manifold rupture, but it is difficult to verify exactly in practice because it depends on the manifold-normal component of the safety force and the curvature term. We therefore use a practical surrogate on constraint-active states, defined as states where the safety mechanism produces a nonzero correction. Specifically, we compare the full safety-force magnitude with the local score magnitude to test whether safety corrections dominate the learned score when constraints are active. On Maze2D-large-v1, over 100 episodes, every episode contains at least one constraint-active state where the safety force exceeds the score magnitude. The median safety-to-score ratio is $78.7\times$, with a maximum exceeding $4000\times$, often persisting across multiple denoising steps. This suggests that the dominance regime described in Theorem~\ref{thm:rupture} is a practical concern once safety constraints become active.

\textbf{Conditional independence and Brownian-bridge approximation.}
Assumption~\ref{assump:independence} consists of two modeling components: conditional independence across conditioned segments, and a Brownian-bridge approximation for the stochastic deviation within each segment. In our implementation, the high-level planner specifies the endpoint conditions for each low-level segment, and segment generation is performed using the standard Diffuser inpainting mechanism~\citep{janner2022diffuser}. At every sampling step, the start and end states of each segment are overwritten with the corresponding graph grounded states, while only the intermediate states are updated by the learned score. Each segment is sampled separately with independent denoising noise and shares no intermediate states with neighboring segments, making the segments conditionally decoupled given the graph node sequence.

On Maze2D-large-v1, we evaluate these approximations by subtracting the endpoint-conditioned mean trajectory and computing Pearson correlations between residuals from each segment and those from its adjacent segment. The resulting adjacent-segment residuals exhibit near-zero correlation, with mean \(-0.010\), mean absolute value \(0.054\), and \(83.9\%\) of pairs below \(0.1\), comparable to a permutation baseline. Under identical conditioning, repeated samples also exhibit a parabolic variance profile: variance is near zero at the fixed endpoints and largest in the segment interior, consistent with Brownian-bridge behavior. The mean agreement with the Brownian-bridge parabolic profile is $0.890$, with $92\%$ of segments above $0.8$. Together, these measurements are consistent with the approximations used in our theory and suggest that they provide a reasonable description of the conditioned sampler.

\begin{figure}[h]
    \centering
    \includegraphics[width=0.7\linewidth]{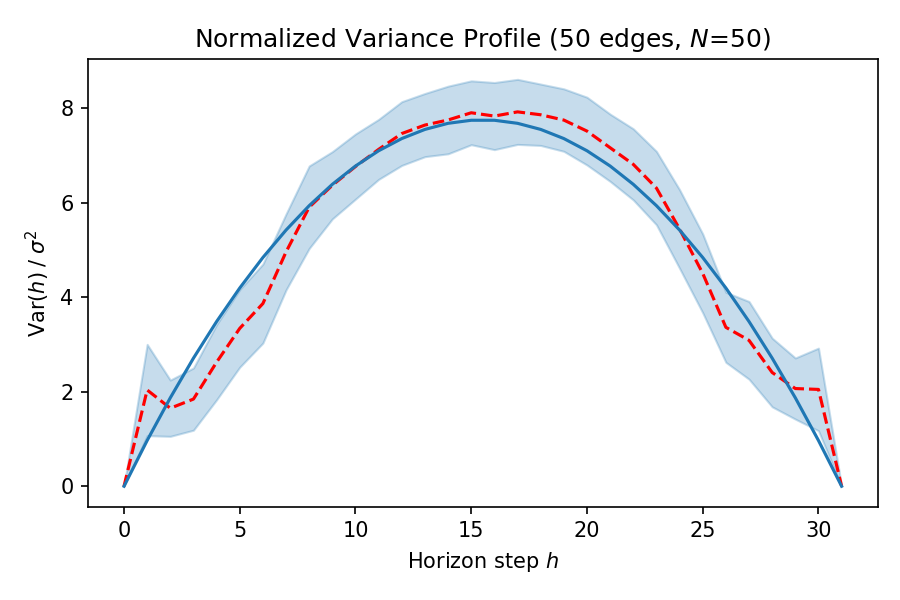}
    \caption{\textbf{Brownian-bridge-like variance profile.}
    For conditioned diffusion segments, the normalized residual variance is small near the fixed endpoints and peaks in the interior, consistent with Brownian-bridge behavior. The blue curve shows the empirical normalized variance averaged over 50 graph edges with \(N=50\) samples per edge, the shaded region indicates variation across edges, and the red dashed curve denotes the Brownian-bridge reference profile.}
    \label{fig:placeholder}
\end{figure}
\section{Additional Ablation Study} \label{sec:edge_transition}

\paragraph{Ablation on edge weights.}
This ablation is tested on the Maze2D navigation task of Section~\ref{sec:exp:maze2d} to study whether high-level planning must explicitly favor \textit{data-supported} edges (via $-\log P(z_j | z_i)$) to avoid selecting brittle graph-node transitions that the low-level diffuser cannot reliably realize. We evaluate the edge scoring function in~\eqref{eq:edge_weight}, which linearly combines Euclidean distance (weighted by $\lambda_{\text{dist}}$) and negative log transition probability (weighted by $\lambda_{\text{prob}}$).

As shown in Table~\ref{tab:ablation_edge_weight}, when $\lambda_{\mathrm{prob}}=0$ (distance-only), the graph search becomes insensitive to how well each edge is supported by the offline trajectories, so it can preferentially select geometrically convenient edges even if their empirical transition support is weak.
In our runs, this setting yields a clear drop in execution reliability (80\% Success Rate; 20\% Crash$\coloneqq 1-\text{Success Rate}$).
Once $\lambda_{\mathrm{prob}}>0$, the planner consistently maintains high Success Rate (97--98\%) across a wide range of $\lambda_{\mathrm{dist}}$, suggesting that incorporating transition likelihood is the key factor for robustness.
Meanwhile, the distance weight mainly affects secondary efficiency metrics (Trajectory Steps and planning Time) only.

\begin{table}[t]
\centering
\small
\setlength{\tabcolsep}{4.5pt}
\renewcommand{\arraystretch}{1.08}
\resizebox{0.8\columnwidth}{!}{%
\begin{tabular}{cc|cc|c|cc}
\toprule
$\lambda_{\mathrm{prob}}$ & $\lambda_{\mathrm{dist}}$ &
Success Rate (\%) & Crash (\%) &
Safety-SPEC $(\uparrow)$ & Trajectory Steps $(\downarrow$) &
Time (s) $(\downarrow)$ \\
\midrule
0.0 & 1.0 & 80.0 & 20.0 & 0.913 & 168.8 $\pm$ 128.8 & 0.95 \\
\midrule
0.5 & 0.5 & 97.0 & 3.0  & 0.770 & 219.8 $\pm$ 138.3 & 1.03 \\
1.0 & 0.0 & 98.0 & 2.0  & 0.770 & 211.0 $\pm$ 140.0 & 1.02 \\
1.0 & 0.2 & 98.0 & 2.0  & 0.770 & 219.5 $\pm$ 138.2 & 1.03 \\
1.0 & 0.5 & 98.0 & 2.0  & 0.770 & 210.3 $\pm$ 138.9 & 1.00 \\
2.0 & 0.2 & 98.0 & 2.0  & 0.770 & 219.5 $\pm$ 138.9 & 1.09 \\
\midrule
1.0 & 1.0 & 97.0 & 3.0  & 0.770 & 207.2 $\pm$ 138.4 & 1.00 \\
1.0 & 2.0 & 97.0 & 3.0  & 0.770 & 206.3 $\pm$ 138.1 & 0.98 \\
1.0 & 5.0 & 95.0 & 5.0  & 0.770 & 205.6 $\pm$ 137.6 & 0.97 \\
\bottomrule
\end{tabular}%
}
\vspace{1mm}
\caption{\textbf{Ablation on edge transition weights.} We vary the edge cost weights $(\lambda_{\mathrm{prob}},\lambda_{\mathrm{dist}})$ and report execution success rate, safety-SPEC, and efficiency metrics on the Maze2D navigation task.
A positive $\lambda_{\mathrm{prob}}$ is clearly required for path safety and reliability, while $\lambda_{\mathrm{dist}}$ mainly affects secondary efficiency metrics.
Safety-SPEC is nearly unchanged across different weights.
}
\label{tab:ablation_edge_weight}
\end{table}

\begin{table}[t]
\centering
\resizebox{0.8\columnwidth}{!}{
\begin{tabular}{lcccc}
\toprule
Config \(\beta\) &
Walker2D Score &
Walker2D Safety &
Hopper Score &
Hopper Safety \\
\midrule
\(0.0\) &
0.4945 & 0.2660 & \textbf{0.4636} & 0.1308 \\
\(0.1\) (default) &
\textbf{0.4963} & \textbf{0.2972} & 0.4460 & \textbf{0.1310} \\
\(50.0\) &
0.4872 & 0.1789 & 0.4626 & 0.1205 \\
\bottomrule
\end{tabular}
}
\vspace{1mm}
\caption{Ablation of the return--density trade-off under tighter locomotion
constraints. The default setting \(0.1\) achieves the best Walker2D score and safety, and the best Hopper safety.}
\label{tab:locomotion_return_density_ablation}
\end{table}

\paragraph{Ablation on the geometric cost.}
We further evaluate the sensitivity of G2SD to the return--density trade-off on Walker2D and Hopper under more challenging constraint settings, with lower obstacle heights: \(1.3\) for Walker2D and \(1.5\) for Hopper. We vary
the configuration \(\beta\), where \(\beta\) weights the density-support term in the high-level graph objective.

As shown in Table~\ref{tab:locomotion_return_density_ablation}, the results show that the return term is the primary driver of task performance, while the density-support term acts as a complementary regularizer. Compared with the return-only setting \((\beta=0.0)\), adding a small positive density weight \((\beta=0.1)\) improves Safety-SPEC on both Walker2D and Hopper. This suggests that a moderate density term helps steer the graph search toward transitions that are better supported by the offline data and safer under tighter constraints.

The ablation also shows that overly emphasizing density support is not beneficial. Because the return and density terms have different numerical scales, setting \(\beta=50.0\) makes the density term dominate the graph objective. While a small density weight improves safety, a large density weight reduces Walker2D safety and slightly degrades overall performance. Overall, G2SD is robust across a range of edge-weight configurations, but the best trade-off is achieved when the return objective is retained, and the density term is used as a moderate support regularizer.

\paragraph{Comparison with geometric connectors.}

We further evaluate whether the diffusion bridge can be replaced by simple geometric connectors between selected graph states. This experiment is conducted on Walker2D and Hopper under more challenging constraint settings, with lower roof limits of \(1.3\) for Walker2D and \(1.5\) for Hopper. We compare our diffusion-based connector against linear interpolation, cubic Hermite spline, zero-velocity Hermite interpolation, and minimum-snap interpolation.

As shown in Table~\ref{tab:connector_ablation}, geometric connectors achieve positive safety margins but yield substantially lower locomotion scores than the diffusion bridge. On Walker2D, the best non-diffusion connector reaches only \(0.024\) score, while the diffusion bridge achieves \(0.4963\), corresponding to a roughly \(20\times\) improvement. On Hopper, the best non-diffusion connector obtains \(0.057\), whereas the diffusion bridge reaches \(0.4460\), giving a roughly \(8\times\) improvement.

These results show that satisfying geometric safety constraints between endpoints is not sufficient for underactuated locomotion. 
Although interpolation-based connectors are computationally cheaper, they do not explicitly model the contact- and dynamics-dependent structure required for executable walking or hopping. 
In contrast, the diffusion bridge samples intermediate motions from the learned local motion manifold, producing segments that better preserve data-supported locomotion dynamics.

\begin{table}[t]
\centering
\resizebox{1.0\linewidth}{!}{
\begin{tabular}{lccccc}
\toprule
Connector 
& Walker2D Score 
& Walker2D Safety 
& Hopper Score 
& Hopper Safety 
& Avg. Time (s) \\
\midrule
Diffusion (Ours)        & \textbf{0.4963} & 0.2972 & \textbf{0.4460} & 0.1310 & 0.560 \\
Linear Interpolation    & -0.002          & 0.4534 & 0.008           & 0.4579 & \textbf{0.002} \\
Cubic Hermite Spline    & 0.011           & 0.4604 & 0.057           & 0.4355 & \textbf{0.002} \\
Hermite (zero-velocity) & 0.024           & \textbf{0.4640} & 0.016 & \textbf{0.4579} & 0.003 \\
Minimum Snap            & 0.003           & 0.4630 & 0.045           & 0.3801 & 0.003 \\
\bottomrule
\end{tabular}
}
\vspace{1mm}
\caption{
Comparison between diffusion bridges and geometric connectors on Walker2D and Hopper under tighter roof constraints.
Although geometric connectors often yield positive safety margins, their task scores remain near zero, showing that safe endpoint connection alone does not produce dynamically executable locomotion.
The diffusion bridge achieves substantially higher scores by generating data-supported intermediate motions.
}
\label{tab:connector_ablation}
\end{table}


\end{document}